\DeclareMathOperator{\diag}{diag}
\newcommand{\thickhline}{%
    \noalign {\ifnum 0=`}\fi \hrule height 2pt
    \futurelet \reserved@a \@xhline
}
\newcolumntype{"}{@{\hskip\tabcolsep\vrule width 1pt\hskip\tabcolsep}}
\newcommand{\matZ}{\mathbf{Z}}
\newcommand{\vecz}{\mathbf{z}}
\newcommand{\vecx}{\mathbf{x}}
\newcommand{\vecy}{\mathbf{y}}
\newcommand{\vecl}{\mathbf{\ell}}
\newtheorem{proof}{Proof}
\newtheorem{lemma}{Lemma}
\DeclareMathOperator*{\argmin}{arg\,min}
\begin{document}

\begin{frontmatter}

\title{CLIMAX: An exploration of Classifier-Based Contrastive Explanations \vspace{-1mm}}

\author[A]{\fnms{Praharsh}~\snm{Nanavati}
\thanks{Corresponding Author Email: praharsh19@iiserb.ac.in.}}
\author[B]{\fnms{Ranjitha}~\snm{Prasad}} 

\address[A]{Indian Institute of Science Education and Research, Bhopal}
\address[B]{Indraprastha Institute of Information Technology, Delhi}

\begin{abstract}
Explainable AI is an evolving area that deals with understanding the decision making of machine learning models so that these models are more transparent, accountable, and understandable for humans. In particular, post-hoc model-agnostic interpretable AI techniques explain the decisions of a black-box ML model for a single instance locally, without the knowledge of the intrinsic nature of the ML model. Despite their simplicity and capability in providing valuable insights, existing approaches fail to deliver consistent and reliable explanations. Moreover, in the context of black-box classifiers, existing approaches justify the predicted class, but these methods do not ensure that the explanation scores strongly differ as compared to those of another class. In this work we propose a novel post-hoc model agnostic XAI technique that provides contrastive explanations justifying the classification of a black box classifier along with a  reasoning as to why another class was not predicted. Our method, which we refer to as CLIMAX which is short for Contrastive Label-aware Influence-based Model Agnostic XAI, is  based on local classifiers . In order to ensure model fidelity of the explainer, we require the perturbations to be such that it leads to a class-balanced surrogate dataset. Towards this, we employ a label-aware surrogate data generation method based on random oversampling and Gaussian Mixture Model sampling. Further, we propose influence subsampling in order to retaining effective samples and hence ensure sample complexity. We show that we achieve better consistency as compared to baselines such as LIME, BayLIME, and SLIME. We also depict results on textual and image based datasets, where we generate contrastive explanations for any black-box classification model where one is able to only query the class probabilities for an instance of interest. 
\end{abstract}

\end{frontmatter}

\section{Introduction}
As AI technology deployment is increasing especially in safety-critical domains, it has necessitated that ML models be interpretable and trustworthy while being accurate. Trustworthiness of an AI system is possible if the target users understand the \emph{how and why} about ML model predictions. Interpretability is also essential owing to severe biases that are induced in the decision-making process of  deep neural networks (DNNs) when  subject to adversaries \cite{social_attack, OnepixelFooling, healthcare_attack}. Governments across the world have introduced regulations towards the ethical use of AI. For instance, General Data Protection Regulation (GDPR) passed in Europe requires businesses to provide understandable justifications to their users for decisions of AI systems that directly affect them \cite{adadi2018peeking}. 

Popular categorization of existing XAI methods is based on XAI models being local \cite{LIME, SHAP, ALIME, DLIME, OptiLIME,Unravel} or global \cite{SHAP}, model agnostic \cite{LIME, SHAP} or model specific \cite{Deeplift}, in-hoc or post-hoc, perturbation or saliency-based \cite{ALIME}, concept-based or feature-based \cite{adadi2018peeking}, etc.
The simplest among them is the well-established post-hoc, perturbation-based techniques such as LIME \cite{LIME} and KernelSHAP \cite{SHAP}. Perturbation-based post hoc explainers offer a model agnostic means of interpreting black-box ML models  while requiring query-level access for a single instance. These methods define a data generating process to obtain weighted perturbations (surrogate data) in the neighborhood of the index sample, and subsequently employ easy-to-explain linear regression  model to obtain per-feature importance weights. Despite the widespread usage of these techniques, subsequent works have pointed out various issues. For instance, LIME leads to inconsistent explanations on a given sample \cite{Indices, BayLIME, DLIME, ALIME, OptiLIME}, hampers its use in safety-critical systems. Although KernelSHAP partially counters the stability issue, it employs training data for explanations. However, more importantly, these methods use  feature attribution to explain the prediction of a black-box models and do not produce contrastive explanations. 

More recently, contrastive \cite{dhurandhar2018explanations, MACEM} and counterfactual approaches \cite{verma2020counterfactual} have been proposed. The goal of a contrastive explanation is not only to justify the output class of an input, but also what should be absent to maintain the original classification, while counterfactual explanations specify necessary minimal changes in the input so that an alternate output is obtained. In this work, we are interested in label-aware, post-hoc technique for providing model agnostic contrastive explanations in the locality of a given instance (which we refer to as the index sample).

\begin{figure*}[h]
\centering
\includegraphics[width=0.85\textwidth]{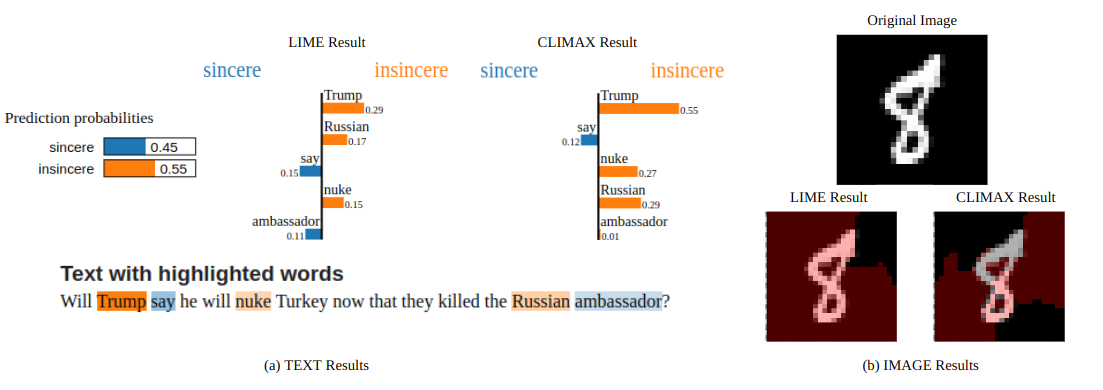}
\caption{Comparisons between CLIMAX and LIME for an instance of the (a) Quora Insincerity Dataset, and (b) The MNIST dataset. The results of CLIMAX are more contrastive as compared to LIME. In the case of textual results we can clearly see the confident nature of CLIMEX, and for the case of Images, we can see that CLIMAX provides more reliable results, as it gives a more precise region, and clearly highlights the ambiguous regions. The same region may play a huge positive role in the other digits. This has been portrayed later.}
\label{fig:Text_Result_CLIMAX}
\end{figure*}

Studies in philosophy and
social science point out that in general, humans prefer contrastive explanations \cite{Lipton}. Let us suppose that the predicted class of the black-box model for the $i$-th instance is $c_i$, and the alternative class-label is $c_{-i}$. Here, answering the question, \emph{Why $c_i$?} leads to just explaining the predicted class as done in most of the post-hoc model agnostic techniques such as LIME, BayLIME, Unravel   and KernelSHAP. However, it is natural to seek a contrastive explanation where queries are of the form \emph{why $c_i$ and not $c_{-i}$?}. As pointed out in \cite{dhurandhar2018explanations},
contrastive explanations  highlight what is minimally sufficient in an input to justify its classification, and identify contrastive features that should be minimally present and critically absent to distinguish it from another input is seemingly close but would be classified differently. Most of the available contrastive explainers require the original training samples, are model-aware, or use complex data generation procedure which leads to opacity in explainer models  \cite{wang2022not, dhurandhar2018explanations}. 

Alternately, we propose a contrastive explainer which is model-agnostic and perturbation-based. In the context of a classification based black-box model, a regression based explanation model provides explanations based on the surrogate dataset generated using the pre-defined data generating process. Note that the data generating process does not mandate samples from all classes since a regression based explainer does not require a balanced dataset.  Essentially, this implies that the class-based feature attribution scores are provided when there may be no information about this class in the surrogate dataset. We question the basic paradigm in post-hoc perturbation based methods which advocates the use a local linear regression model and instead, we focus on a local logistic regression model. A classifier based explanation model necessitates that the surrogate samples form a balanced dataset, i.e., there are approximately equal number of samples from different classes. Essentially this implies that class-based attribution score is obtained after ensuring that surrogate data samples with all class information is present in the surrogate dataset. This leads to contrastive explanations and improved stability of the explainer method. 

\paragraph{Contributions:} In this work, we propose a contrastive label-aware sample-efficient post-hoc explainable AI (XAI) technique called CLIMAX. Briefly, our contributions are as follows:
\begin{itemize}
    \item We propose two variants of the logistic regression (LR) based explainer and generation of a label-wise balanced surrogate dataset. Similar to LIME, the per-feature weight obtained from the LR model provides the contrastive feature attribution scores. Essentially this allows us to exploit the classification boundary of the black-box model and explain each instance, from a dual point of view, i.e.,
\begin{itemize}
    \item \textit{Why point `a' must lie in class ${c}_{i}$} and 
    \item \textit{Why point `a' must not lie in classes ${c}_{-i}$}
\end{itemize}
    \item Influence functions are a classic technique from robust statistics which trace a model’s prediction through the learning algorithm and back to its training data thereby identifying training points most responible for a given prediction. We use this module within our surrogate data generator as it helps reduce the sample complexity. We observe that the performance of the model after subsampling stays at par with the original model, and sometimes even surpasses it.
\end{itemize} 

\section{Related Works and Novelty}

In this work, we are interested in label-aware model-agnostic post-hoc locally interpretable models. We discuss the related works by highlighting the critical aspects in comparison to the proposed method as in the sequel.\\
\textbf{Instability}:~ Instability or inconsistency issues in the explanations scores of LIME over several iterations \cite{DLIME,ALIME} is well-known. This inconsistency occurs due to random perturbation-based surrogate datasets. A deterministic hierarchical clustering approach for consistent explanations was proposed in DLIME \cite{DLIME}, and its biggest drawback is that it requires training data for clustering. To avoid the additional task of `explaining the explainer,' techniques like ALIME \cite{ALIME,SmartSamplingGAN} are not preferred. A parametric Bayesian method as proposed by \cite{BayLIME}, where a weighted sum of the prior knowledge and the estimates based on new samples obtained from LIME is used to get explanations in a Bayesian linear regression framework. Both LIME and BayLIME employ hyperparameters (kernel-width) that need to be tuned. Recently, \cite{slack2020reliable} proposed a technique known as focused sampling, which utilizes uncertainty estimates of explanation scores. In \cite{Unravel}, authors propose a Gaussian process based active learning module to abate issues of instability. In this work, we used a sampling strategy that ensures that we are balanced with respect to the labels. This ensures that we obtain good stability and low inconsistency in explanation scores. \\
\textbf{Sample Complexity}:~Sample efficiency in post-hoc models is a crucial factor in efficiently obtaining reliable explanations, and there is consensus in the community that explainable models must use as few samples for an explanation as possible \cite{slack2020reliable}. Approaches such as LIME, KernelSHAP, and BayLIME do not provide any guidance on choosing the number of perturbations, although this issue has been acknowledged \cite{BayLIME}. Influence functions \cite{koh2017understanding} are known to reduce the sampling complexity and the reduced sample set can be used for providing robust explanations. We exploit influence functions to achieve fidelity and sample complexity goals simultaneously via our surrogate dataset.\\
\textbf{Classifier-based Explainers}: ~Techniques like LIME \cite{LIME} and KernelSHAP \cite{SHAP} fit linear regression model on classification probabilities. This leads to a separate set of explanation scores for each class, where the scores try to explain why a class is predicted. Intuitively, regression black-box models are well-explained by linear regression explanation models and black-box classifiers are better explained by linear classifier explainers. In particular, classifier based explainers are expected to provide a  robust set of explanations as they can exploit the classification boundary explicitly to provide information about why a point lies in a class $c_i$, and why not in the other classes, $c_{-i}$. This problem has been acknowledged in \cite{moradi2021post}, and the authors propose explanation scores based on confident item sets. In \cite{vlassopoulos2020explaining}, authors approximate the local decision boundary, but use a variational autoencoder for surrogate data generation, leading to opaque data generation. We propose a classifier based explainer which makes use of probabilities of all classes, and hence is more contrastive.\\
\textbf{Constrastive Explainers}:
Contrastive explanations clarify why an event occurred in contrast to another. They are inherently intuitive to humans to both produce and comprehend. There are a few techniques that already exist in the literature, such as the Contrastive Explanations Method, which makes use of pertinent positives and pertinent negatives to define those features are important and those that are not, respectively \cite{dwivedi2023explainable, jacovi2021contrastive, yang2022omnixai, dhurandhar2018explanations}. In \cite{wang2022not}, authors propose a framework that convert an existing back-propagation explanation method to build class-contrastive explanations, especially in the context of DNNs. However, these methods are not model agnostic, and often assume  access to training data. In \cite{rathi2019generating}, authors repurpose Shapley values to generate counterfactual and contrastive global explanations. In \cite{MACEM}, authors propose Model Agnostic Contrastive Explanations Method (MACEM), to generate contrastive explanations for any classification model where one is able to only query the class probabilities for a desired input restricted to be structured tabular data. \\
\textbf{Novelty:} In comparison, CLIMAX is novel in the following ways:
\begin{itemize}
    \item CLIMAX provides feature importances by explaining as to why the index sample belongs to a specific class and in the process, it also provides strong justification about why other classes were not predicted. This effect is brought about in CLIMAX using local classifiers for explanations,  without  explicitly solving for pertinent positives and negatives.
    \item CLIMAX is a perturbation-based technique, which implies that it does not require any access to training data. 
    \item CLIMAX explains the decision boundary of the black-box classifier, which is the most relevant characteristic of classifiers that are optimized for accuracy.
\end{itemize}

\section{Mathematical Preliminaries}
\label{sec:math}
In this section, we describe the mathematical preliminaries of the popular local explainer namely LIME, for classifier models. 

Local explainer models are interpretable models that are used to explain individual predictions of black box machine learning models. Among several methods, Local interpretable model-agnostic explanations (LIME) portrays a concrete implementation of local explainer model. These models are trained to approximate the predictions of the underlying black box model locally, in the neighborhood of the sample of interest and hence, these models may or may not be a valid explainer globally.\cite{molnar2020interpretable}

\textbf{Notation} Let $f: \mathbb{R}^{d} \rightarrow [0, 1]$ denote a black-box binary classifier, that takes a data point $\vecx \in \mathbb{R}^d$ ($d$ features) and returns the probability that $\vecx$ belongs to a certain class. Our goal is to explain individual
predictions of $f$ locally. Let $\mathcal{Z}$ be a set of $n'$ randomly sampled instances
(perturbations) around $\vecx$. The proximity between $\vecx$ and any $\vecz \in \mathcal{Z}$ is given by $\pi_{\vecx}(\vecz) \in \mathbb{R}$. We denote the vector of these distances over the $n'$ perturbations in $\mathcal{Z}$ as $\Pi_{\vecx}(\matZ) \in \mathbb{R}^{n'}$. Let $\boldsymbol{\phi} \in \mathbb{R}^{d}$ denote the explanation in terms of feature importances for the prediction
$f(\vecx)$.

Let $\boldsymbol{y}_1, \boldsymbol{y}_0 \in \mathbb{R}^{n'}$ be the black-box predictions for $n'$ surrogate samples corresponding to class-1 and class-0, respectively, such that for the $i$-th instance in $\mathcal{Z}$, ${y}_1(i) = f(\vecz_i)$ and ${y}_0(i) = 1- f(\vecz_i)$, and since they are probabilities, ${y}_1(i),y_0(i) \in [0, 1]$. LIME explains the predictions of the classifier $f$ by learning a linear model locally around each prediction. Hence, in the case of LIME the coefficients of the linear model are assigned as $\boldsymbol{\phi}$ are treated as the feature contributions to the black box prediction \cite{slack2020reliable}. Accordingly, the objective function for LIME constructs an explanation that approximates the behavior of the black box accurately in the vicinity (neighborhood) of $\vecx$ by solving:
\begin{equation}
\argmin_{\boldsymbol{\phi}}\sum_{\vecz \in \mathcal{Z}} [f(\vecz) - \boldsymbol{\phi}^{T}\vecz]^2 \pi_{\vecx}(\vecz),
    \label{eq: objective}
\end{equation}
which has a closed-form solution for class $c \in \{0,1\}$ given by:
\begin{equation}
\boldsymbol{\hat{\phi}}_c = \matZ^{T}\diag(\Pi_{\vecx}(\matZ))\matZ + \boldsymbol{I})^{-1}(\matZ^{T}\diag(\Pi_{\vecx}(\matZ))\boldsymbol{y}_c.
\label{eq: closed-form}
\end{equation}
LIME assigns different importance scores to different classes as by design, it is not possible to incorporate the information about probabilities of both the classes into a single linear regression framework. As mentioned earlier, this is sufficient until the question is `why $c$', as this question does not seek explanations about the other classes. Furthermore, the challenge in LIME arises in selecting a valid neighborhood or locality for surrogate sampling. LIME uses random sampling where these samples are chosen heuristically: $\pi_{\vecx}(\vecz)$ is computed as the cosine or $l_2$ distance. 

\section{Proposed Techniques and Algorithms}
We propose a classifier-based explainer, which we refer to as Contrastive Label-aware Influence-based Model-Agnostic XAI (CLIMAX), to understand and exploit the classification boundary as dictated by the black-box model so as to explain each instance, from dual points of view as stated before. Essentially, our method's reasoning is based on why a given point  must lie in class $c_{i}$ and not in classes $c_{-i}$. This is possible as unlike LIME, at the time of assigning scores, CLIMAX has access to all the class probabilities and the local classifier fits its boundary according to that. 

CLIMAX explains the predictions of the binary classifier $f(\cdot)$ by learning a logistic regression model locally around each prediction where, the probability of class-1 and class-0 according to the explainer is given by $\sigma(\boldsymbol{\phi^{T}}\vecz)$ and $(1 - \sigma(\boldsymbol{\phi^{T}}\vecz))$, respectively, where $\sigma(\cdot)$ is the sigmoid function. We now define two different variants of the CLIMAX method.


\subsection{L-CLIMAX}
In this section, we propose a local classifier explainer that results in logistic outputs, and we formally refer to this as Logistic CLIMAX, or L-CLIMAX. In order to derive the loss function, we state the following lemma. 
\begin{lemma}
Given a dataset $\mathcal{D}$ with the $i$-th instance such that $\{ \vecz_i, \vecy_i\} \in \mathcal{D}$ where $\vecz_i \in \mathbb{R}^d$ are the covariates and $\vecy_i \in \mathbb{R}^{|\mathcal{C}|}$ represents the class-probabilities, linear model on logistic outputs can be obtained as
\begin{equation}
    \argmin_{\boldsymbol{\phi}}(\vecl - \boldsymbol{\phi}^{T}\matZ)^T 
\diag(\Pi_\vecx{(\matZ)})
(\vecl-\boldsymbol{\phi}^{T}\matZ),
\end{equation}
where the $i$-th entry of $\vecl \in \mathbb{R}^{n'}$  is given by $\vecl(i) = \log\left(\frac{y_i}{1-y_i}\right)$ obtained from the black-box model, the $i$-th column in $\matZ \in \mathbb{R}^{d \times n'}$ is given by the surrogate sample $\vecz_i$, and $\diag(\Pi_\vecx{(\matZ)})$ is a diagonal matrix whose $(i,i)$-th entry is given by  $\pi_\vecx(\vecz_i)$.
\end{lemma}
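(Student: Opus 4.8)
The plan is to read the statement as a change-of-link derivation: the local explainer posits that the class-$1$ probability at a point $\vecz$ equals $\sigma(\boldsymbol{\phi}^T\vecz)$, and local fidelity to the black box asks that this reproduce $y_i = f(\vecz_i)$ on the surrogate points, with each match weighted by the proximity $\pi_\vecx(\vecz_i)$, exactly in the spirit of the LIME program \eqref{eq: objective}. So I would start from the proximity-weighted fidelity functional $\sum_i \pi_\vecx(\vecz_i)\,\big[\sigma(\boldsymbol{\phi}^T\vecz_i) - y_i\big]^2$ posed in probability space and transport it to logit space.

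First I would record the elementary fact that $\sigma:\mathbb{R}\to(0,1)$ is a strictly increasing bijection with inverse $\sigma^{-1}(p) = \log\!\big(p/(1-p)\big)$; hence, for every $i$ with $y_i\in(0,1)$, the pointwise equality $\sigma(\boldsymbol{\phi}^T\vecz_i) = y_i$ holds if and only if $\boldsymbol{\phi}^T\vecz_i = \log\!\big(y_i/(1-y_i)\big) = \vecl(i)$. In words, exact local fidelity on the probability scale is the same as exact fidelity of the linear predictor $\boldsymbol{\phi}^T\vecz_i$ to the logit-transformed target $\vecl(i)$. Since a single $\boldsymbol{\phi}\in\mathbb{R}^d$ cannot in general interpolate $n'>d$ logit targets, I would then relax exactness to a proximity-weighted least-squares fit of $\boldsymbol{\phi}^T\vecz_i$ to $\vecl(i)$, i.e.\ minimise $\sum_i \pi_\vecx(\vecz_i)\,(\vecl(i) - \boldsymbol{\phi}^T\vecz_i)^2$.

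The remaining step is bookkeeping. Stacking the surrogate points as the columns of $\matZ\in\mathbb{R}^{d\times n'}$ makes $\boldsymbol{\phi}^T\matZ$ the (row) vector of linear predictors, $\diag(\Pi_\vecx(\matZ))$ the diagonal matrix of nonnegative weights $\pi_\vecx(\vecz_i)$, and, reading $\vecl$ and $\boldsymbol{\phi}^T\matZ$ as vectors in $\mathbb{R}^{n'}$ in the usual way, the sum collapses to the quadratic form $(\vecl-\boldsymbol{\phi}^T\matZ)^T\diag(\Pi_\vecx(\matZ))(\vecl-\boldsymbol{\phi}^T\matZ)$, whose minimiser over $\boldsymbol{\phi}$ is the claimed objective. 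This is precisely \eqref{eq: objective} with $f(\vecz)$ replaced by its log-odds, the natural surrogate loss once the explainer's link is the sigmoid.

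The main obstacle is honesty about the claim rather than any hard computation. The logit $\vecl(i)$ is undefined at $y_i\in\{0,1\}$, so the derivation needs the black-box probabilities assumed interior (or clipped to $[\epsilon,1-\epsilon]$), which I would state as a standing assumption. More substantively, this transformed-response weighted least squares is not the logistic-regression maximum-likelihood fit; it is its linearised surrogate (equivalently, one reweighted-least-squares step with identity working weights), so the lemma is best presented as the definition of the L-CLIMAX loss together with the observation that its minimiser is the linear predictor best reproducing the black box's log-odds in the weighted $\ell_2$ sense — and I would phrase it that way to avoid over-claiming an exact equivalence with logistic regression.
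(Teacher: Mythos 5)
Your proposal is correct and follows essentially the same route as the paper: invert the sigmoid link to obtain log-odds targets $\vecl(i)=\log\bigl(y_i/(1-y_i)\bigr)$, fit the linear predictor $\boldsymbol{\phi}^T\vecz_i$ to these targets under the proximity-weighted squared loss of \eqref{eq: objective}, and collapse the sum into the stated quadratic form. The only additions beyond the paper's argument are your (valid and worthwhile) caveats that the logit is undefined at $y_i\in\{0,1\}$ and that this transformed-response least squares is a linearised surrogate rather than the logistic-regression maximum-likelihood fit, which the paper implicitly glosses over by positing Gaussian noise directly on the log-odds.
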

\begin{proof}
The output of the logistic explainer model is given as 
\begin{align}
    y_{i} = \sigma(\boldsymbol{\phi}^{T}\vecz_{i}) = \frac{1}{1+e^{-\boldsymbol{\phi}^{T}\vecz_{i}}}.
\end{align}
The above expression can be rewritten in terms of the \textit{log-odds} representation of the logistic output as
\begin{align}\label{eq:log}
  \boldsymbol{\phi}^{T}\vecz_{i} = \log\left(\frac{y_{i}}{1-y_{i}}\right) \triangleq \ell(\vecz_i).
\end{align}
The above formulation allows us to model the black box prediction of each perturbation $\vecz_i$ as a linear
combination of the corresponding feature values ($\boldsymbol{\phi}^{T}\vecz_{i}$) plus an error term, i.e.,
\begin{align}
   \ell(\vecz_i) = \boldsymbol{\phi}^{T}\vecz_{i} + \epsilon_i, 
   \label{eq:linreglogreg}
\end{align}
where we model $\epsilon_i \sim \mathcal{N}(0,\sigma^2)$. Here, $l(\vecz_i)$ is obtained from the black box classifier. Incorporating the objective function of LIME in the context of \eqref{eq:linreglogreg} leads to 
\begin{equation}
\argmin_{\boldsymbol{\phi}}\sum_{\vecz_i \in \matZ} [\ell(\vecz_i) - \boldsymbol{\phi}^{T}\vecz_i]^2 \pi_{\vecx}(\vecz_i).
   \label{eq: climax_objective}
\end{equation}
Rewriting \eqref{eq: climax_objective} in terms of vector and matrices namely $\vecl$, $\matZ$ and $\diag(\Pi_\vecx{(\matZ)})$, we obtain
\begin{equation}
    \argmin_{\boldsymbol{\phi}}(\vecl - \boldsymbol{\phi}^{T}\matZ)^T 
\diag(\Pi_\vecx{(\matZ)})
(\vecl-\boldsymbol{\phi}^{T}\matZ).
\label{eq:ClimaxObjLem1}
\end{equation}
\end{proof}

Solving \ref{eq:ClimaxObjLem1} by including a regularizer of the form $\lambda \Vert{\boldsymbol{\phi}}\Vert_{2}^{2}$, we obtain the closed form solution for $\boldsymbol{\phi}$ as
\begin{equation}
    \boldsymbol{\hat{\phi}} = ({\matZ \diag(\Pi_\vecx{(\matZ)}) \matZ^{T} + \lambda \boldsymbol{I}}) ^{-1} \matZ \diag(\Pi_\vecx{(\matZ)}) \vecl
\end{equation}

\subsection{CE-CLIMAX}
The second variant of CLIMAX constructs an explanation that approximates the behavior of the black box accurately in the vicinity of the index sample (neighborhood) of $\vecx_0$ by directly optimizing the log-loss, i.e., we obtain feature importance values by solving the following: 
\begin{equation}
\argmin_{\boldsymbol{\phi}}\sum_{\vecz_i \in \matZ} f(\vecz_i)\log \vecy_i+ (1-f(\vecz_i))\log{(1-y_i)},
\label{eq:CEObjective}
\end{equation}
where $y_i = \sigma(\boldsymbol{\phi}^{T}\vecz_i)$. We call this variant as Cross-Entropy CLIMAX or CE-CLIMAX.
Note that unlike LIME, we do not explicitly weigh each surrogate sample using $\pi_\vecx(\vecz)$ in the second variant. 

Some of the salient aspects of both of the above formulation as compared to LIME are as follows:
\begin{itemize}
    \item LIME-like methods that ask the question `why $c_i$?' provide explanations label-wise, i.e., they iterate over all labels, explain why the sample index should be a part of that class and provides the scores. L-CLIMAX and CE-CLIMAX iterate over two sets of probabilities, one corresponding to the current class label and the other corresponding to all probabilities of the remaining classes. This can be explicitly seen in the objective where we use both $\vecy_i$ and $1-\vecy_i$ together as in \ref{eq: climax_objective} and \eqref{eq:CEObjective}. 
    \item Interpretation of $\boldsymbol{\phi}$: In the case of LIME, $\boldsymbol{\phi}$ determines the feature importances according to the regressor values. In CLIMAX, $\boldsymbol{\phi}$ has a slightly different interpretation. Here, $\boldsymbol{\phi}$ is larger for those features that help in increasing the `contrast' between explanations. Nevertheless, in both LIME and CLIMAX, it is safe to say that $\boldsymbol{\phi}$ highlights important features.
    \item Both the above formulation has the simplicity and elegance of LIME and related methods. It remains training data agnostic and model-agnostic. Additionally, L-CLIMAX can be implemented with a slight change to the existing LIME framework.
\end{itemize}

\subsection{Imbalance-aware Surrogate Sampling}
An important aspect for realization of L-CLIMAX and CE-CLIMAX is surrogate sampling required to form the the set $\mathcal{Z}$ in the previous subsection. In order to ensure fidelity of explainers, our sampling technique needs to be imbalance-aware since we use classifier based local explainers.

We use the Bootstrap sampling technique, where we repeatedly sample the neighborhood of $\vecx$ with replacement. The main goal is to ensure that the surrogate dataset is balanced, i.e., it consists of atleast a few samples belonging to all classes albeit in different proportions. To achieve this, we perform Gaussian sampling similar to \cite{LIME} and increase the standard deviation appropriately (to increase the neighborhood size) to obtain surrogate instances from all classes. In order to further reduce  imbalance in $\mathcal{Z}$, we do the following:
\begin{itemize}
    \item \textbf{Random oversampling}: We oversample within the minority class in order to ensure that the classes are perfectly balanced. 
    \item \textbf{Gaussian Mixture models}: A Gaussian mixture model (GMM) is a probabilistic model that assumes all the instances are generated from a mixture of a finite number of Gaussian distributions with unknown parameters \cite{sklearn}. We train a GMM consisting of $c$ Gaussians using the bootstrapped samples, and later use it to appropriately oversample the minority classes to obtain a balanced surrogate dataset. 
\end{itemize}

\begin{algorithm}
 \caption{GMM -- Sampling from a Gaussian Mixture Model}.
 \label{alg:GMM}
 \begin{algorithmic}[1]
 \REQUIRE Imbalanced Surrogate dataset $\mathcal{D}$, and corresponding labels $\vecy_{\mathcal{D}}$, Number of classes $c$
\STATE Fit a GMM on $\mathcal{D}$ to get cluster mean and variances.
 \STATE Identify minority classes, based on the number of instances in each cluster.
 \STATE Sample the required number of minority class instances.
\ENSURE Oversampled Data from the Gaussian Mixture Model
\end{algorithmic} 
\end{algorithm}

The above detailed sampling strategies may not necessarily improve the quality of samples. However, diminishing the imbalance in $\mathcal{Z}$  helps us in maintaining local fidelity and a consistent contrastive nature in the  explanation scores $\boldsymbol{\phi}$. In the sequel, we also demonstrate the improved stability performance of CLIMAX as compared to other perturbation based methods. Subsequently, we perform forward feature selection as proposed in LIME, to obtain the top $k$ features, and then return the scores for the explanation. We have explained the entire algorithm in Algorithm~\ref{alg:climax}.

\begin{algorithm}
 \caption{CLIMAX -- Contrastive Label-aware Influence-based Model-Agnostic XAI method}.
 \label{alg:climax}
 \begin{algorithmic}[1]
  \REQUIRE Black-box binary classifier model $f$, Instance $\vecx \in \mathbb{R}^{d}$, Number of features $d$, Number of surrogate samples $n'$
 \STATE Using $(x_0, f_{p})$, generate $n'$ surrogate samples and $\vecy_{i} = f_{p}(\vecx_i) \forall i = [n']$. 
\STATE Train explainer model $f_{e}$, using the surrogate dataset $\mathcal{D}$ and $\vecy_{i}$ obtained in $1$. 
\IF{Surrogate Sampling Style = `GMM'}
\STATE Perform GMM Oversampling as mentioned in Algorithm \ref{alg:GMM}
\ELSE
\STATE Identify the Minority Classes, $c_{m}$ and perform Random Oversampling for all $c_m$
\ENDIF{}
\IF{Influence Subsampling is True}
\STATE Perform influence subsampling using \eqref{computePhi} and \eqref{computeP}.
\ENDIF{}
\STATE Fit the logistic regression model in the locality of $\vecx_0$ according to \eqref{eq:ClimaxObjLem1} or \eqref{eq:CEObjective}.
\STATE \textbf{return } Feature Importance Scores $\boldsymbol{\phi}$
\end{algorithmic} 
\end{algorithm}

\subsection{Sample Complexity}

Sample efficiency in post-hoc models is a crucial factor in obtaining reliable explanations, and there is consensus in the research community that explainable models must use as few samples for an explanation as possible \cite{slack2020reliable}.  In both variants of Climax, we oversample the surrogate samples in order to ensure a balanced surrogate dataset, and hence, we have some redundant information within the data. Approaches such as LIME, KernelSHAP, and BayLIME do not provide any guidance on choosing the number of perturbations, although this issue has been acknowledged in \cite{BayLIME}. In \cite{Unravel}, sample complexity is dictated by an acquisition function and sampling is achieved via Gaussian processes. Often, such methods turn out to be too complex. 

We consider subsampling the surrogate samples using influence functions. Rooted in statistics, influence functions estimate how the model parameters change when a data point is upweighted by a small amount $\boldsymbol{\epsilon}$. Using influence functions, Koh and Liang \cite{koh2017understanding} proposed a method for estimating the impact of removing a data point
from the training set (reducing its weight to $0$) on the model parameters. We use this method to perform subsampling within our surrogate dataset to improve its quality. Influence functions help to build a tool to quantify each data point’s quality, thereby keeping good examples and dropping bad examples to improve the model’s generalization ability. Previous works focus on weighted subsampling, that is, trying to maintain the model performance when dropping several data. The steps in the case of influence subsampling \cite{wang2020less} is as follows:
\begin{itemize}
    \item Train the explainer model on the full set of surrogate samples:
    \begin{align}
       \boldsymbol{\hat{\theta}} = \underset{\boldsymbol{\theta} \in \boldsymbol{\Theta}}{\operatorname{argmin}}\frac{1}{n}\sum_{i=1}^{n}L(\vecz_{i}, \boldsymbol{\theta}),
    \end{align}
    where $\hat{\theta}$, is the set of optimal parameters.
    \item Compute the influence function for each surrogate sample:
    \begin{equation}
        \boldsymbol{{\rho}} = (\boldsymbol{\rho}(\vecz_{1},\boldsymbol{\hat{\theta}}), \boldsymbol{\rho}(\vecz_{2},\boldsymbol{\hat{\theta}}),..,\boldsymbol{\rho}(\vecz_{n},\boldsymbol{\hat{\theta}})).
        \label{computePhi}
    \end{equation}
    Here, $\rho$ denotes the value of the influence function \cite{koh2017understanding}.
    \item Compute the sampling probability of each surrogate sample:
      \begin{equation}
        \boldsymbol{{\psi}} = (\boldsymbol{\psi(\vecz}_{1},\boldsymbol{\hat{\theta}}), \boldsymbol{\psi(\vecz}_{2},\boldsymbol{\hat{\theta}}),..,\boldsymbol{\psi(\vecz}_{n},\boldsymbol{\hat{\theta}})),
        \label{computeP}
    \end{equation}
    where $\psi$ denotes the sampling probability of each surrogate sample as computed in \cite{koh2017understanding}. Using these quantities, we obtain how influential a point is, and we can trim our surrogate dataset.
    \item Finally, we perform subsampling based on the influence scores and train a subset model using the reduced set of surrogate samples.
    $$\boldsymbol{\Tilde{\theta}} = \underset{\boldsymbol{\theta} \in \boldsymbol{\Theta}}{\operatorname{argmin}}\frac{1}{\{i,\boldsymbol{o}_{i}=1\}}\sum_{\boldsymbol{o}_{i}=1}L(\vecz_{i}, \boldsymbol{\theta})$$
    Where, $\tilde{\boldsymbol{\theta}}$ gives the optimal parameters for the subsampled sets and the function $\textbf{\textit{o}}$ is an indicator function which is $1$ if the $i^{th}$ point is included in the subsampled set or not.
\end{itemize}

\section{Results and Discussions}
In this section, we demonstrate the efficacy of the proposed CLIMAX framework on publicly-available datasets. In particular, we are interested in establishing the contrastive capability of CLIMAX and investigating the attributes such as stability (consistency in repeated explanations) and sample efficiency. We employ tabular(structured data), textual, and image datasets, and consider different black-box models for an explanation. \footnote{Source code available at \url{https://github.com/niftynans/CLIMAX}}
\vspace{-3mm}
\subsection{Datasets and Pre-processing}
We chose four distinct datasets from the UCI Machine Learning
repository \cite{Dua:2019} as well as Scikit-Learn \cite{sklearn} for the tabular data based experiments owing to their usage in the relevant literature
for classification based  prediction tasks. The description of the tabular 
datasets is as follows:
\begin{itemize}
    \item \textbf{Breast Cancer}: This dataset consists of $569$ instances, with $30$ features computed from an image of a breast mass, describing characteristics of the cell nuclei \cite{scikit-learn}. Hence, the classification task is to predict if the cancer is malignant or not. 
    
    \item \textbf{Parkinson's}: The Parkinson's classification dataset consists of $195$ instances of patients suffering and free from Parkinson's disease \cite{Dua:2019}. With $22$ unique features per recording, the task is to classify whether a given patient has Parkinson's or not.

    \item \textbf{Ionosphere}: This dataset consists of $34$ features, and $351$ instances of radar data that was collected in Goose Bay, Labrador \cite{Dua:2019}. The targets were free electrons in the ionosphere. The classification task was to label the instances as `good' or `bad'. 
    
    \item \textbf{Diabetes}:  This dataset consists of $8$ attributes and $768$ data points that describes the medical records for Diabetes patients. It contains information about the pregnancy status, insulin levels, blood pressure and other medical attributes about the patients \cite{Dua:2019}.
\end{itemize}

For text, we use the Quora Insincere Questions dataset \cite{kaggle}, where the classification task is to identify if a question is sincere or not. We  also use the  20 News Groups dataset \cite{scikit-learn} where the classification task is amongst two groups: Atheism and Christianity. We determine whether a given paragraph is written by an atheist or a Christian. Due to lack of space, we present the results for the 20News Groups dataset in the Supplementary.


For images, we use the MNIST dataset \cite{deng2012mnist} in order to contrast the relevant regions that contribute to the prediction of each digit.


\subsection{Baselines}
CLIMAX focuses only on classification-based tasks. It is a perturbation-based technique, i.e., we do not assume any knowledge of the training samples or an autoencoder that may be trained on original data, but instead, we obtain surrogate samples in the vicinity of the index sample. Hence, we baseline CLIMAX using other perturbation-based methods that employ similar assumptions in their workflow. We use LIME and BayLIME \cite{BayLIME} as our baselines primarily because they are perturbation based, and require the knowledge of the index sample and variance of the features in the training data. Among the array of XAI methods, S-LIME \cite{zhou2021s}, uses the central limit theorem to obtain the optimal number of surrogate samples is a method with good performance, and hence a good baseline. For simulating the black-box prediction model, we used a Random Forest Classifier for all the classification tabular datasets. A summary of the dataset and prediction model statistics can be found in Table \ref{tab:dataset_description}. We used the open-source Scikit-Learn \cite{sklearn} implementation of the Random Forest classifier to simulate the black-box prediction models. 

\subsection{Numerical Results}
In this section, we numerically demonstrate the stability and the contrastive nature of variants of the CLIMAX algorithm. Our method works on data of different modalities such as tabular, text and image, and hence we showcase its performance for each modality.

\begin{table}
        \caption{Description of datasets.}
\resizebox{\columnwidth}{!}{%
  \begin{tabular}{|c|c|c|c|c|c|}
    \hline
    \textbf{Dataset} & $p$ & $n_{total}$ & Precision & Recall & AUC-ROC\\
    \hline
    \textbf{Breast Cancer} & 30 & 569 & 0.978 & 0.968 & 1.0\\
    \textbf{Parkinson's} & 22  & 175 & 0.96	& 1.0 & 0.857\\
    \textbf{Ionosphere} & 34  & 351 & 0.936  &	0.976 & 0.917\\
    \textbf{Diabetes} & 8 & 768 & 0.719 & 0.672 & 0.725\\
    \hline
    \end{tabular}%
}
    \label{tab:dataset_description}
\end{table}

\subsubsection{Stability in repeated explanations}
\begin{figure*}
        \begin{subfigure}[b]{0.25\textwidth}
                \centering
                \includegraphics[width=\linewidth]{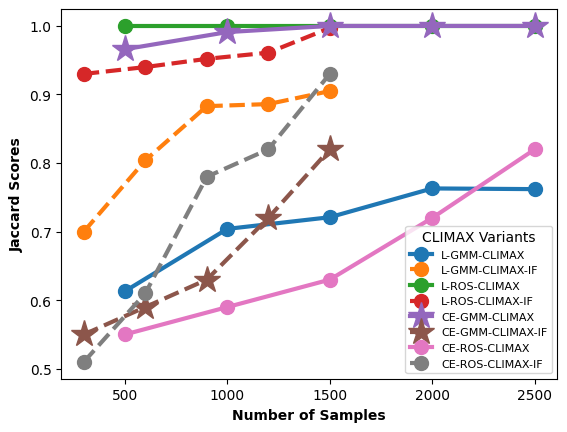}
                \caption{Breast Cancer Dataset}
                \label{fig:Breast Cancer Plot}
        \end{subfigure}%
        \begin{subfigure}[b]{0.25\textwidth}
                \centering
                \includegraphics[width=\linewidth]{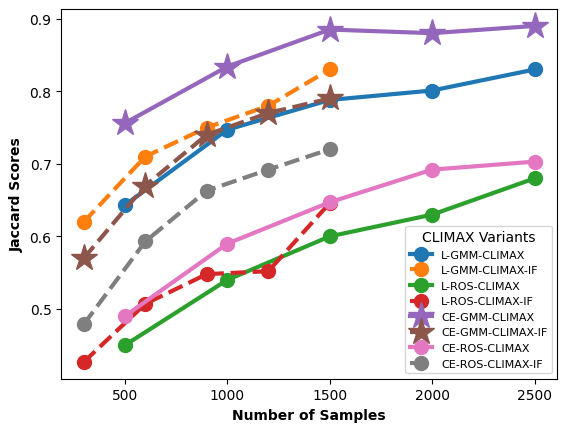}
                \caption{Parkinson's Dataset}
                \label{fig:Parkinson's Plot}
        \end{subfigure}%
        \begin{subfigure}[b]{0.25\textwidth}
                \centering
                \includegraphics[width=\linewidth]{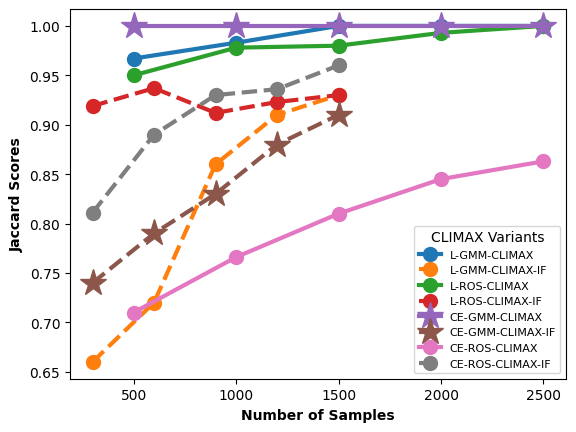}
                \caption{Ionosphere Dataset}
                \label{fig:Ionosphere Plot}
        \end{subfigure}%
        \begin{subfigure}[b]{0.25\textwidth}
                \centering
                \includegraphics[width=\linewidth]{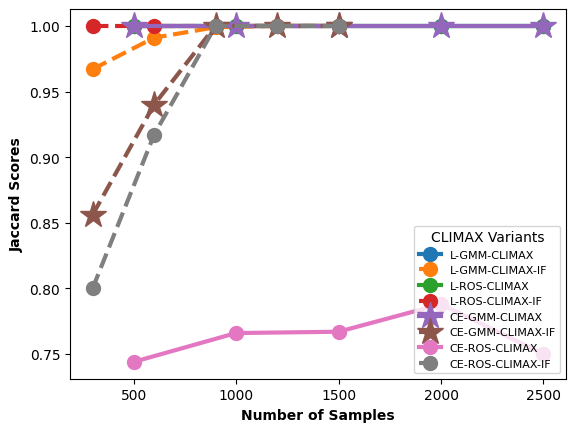}
                \caption{Diabetes Dataset}
                \label{fig:Diabetes Plot}
        \end{subfigure}

        \caption{Average Jaccard scores for $10$ randomly sampled test instances across varying surrogate samples, for variants of CLIMAX.}\label{fig:CLIMAX_Tabular}
\end{figure*}

\begin{figure*}
        \begin{subfigure}[b]{0.25\textwidth}
                \centering
                \includegraphics[width=\linewidth]{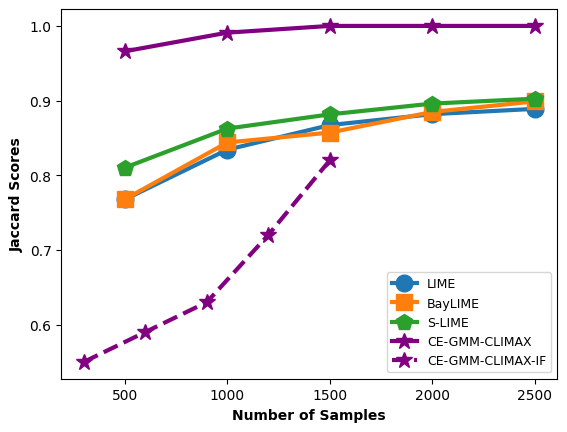}
                \caption{Breast Cancer Dataset}
                \label{fig:Breast Cancer Plot}
        \end{subfigure}%
        \begin{subfigure}[b]{0.25\textwidth}
                \centering
                \includegraphics[width=\linewidth]{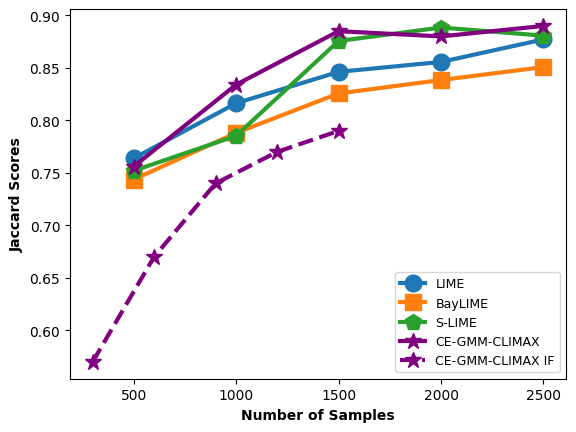}
                \caption{Parkinson's Dataset}
                \label{fig:Parkinson's Plot}
        \end{subfigure}%
        \begin{subfigure}[b]{0.25\textwidth}
                \centering
                \includegraphics[width=\linewidth]{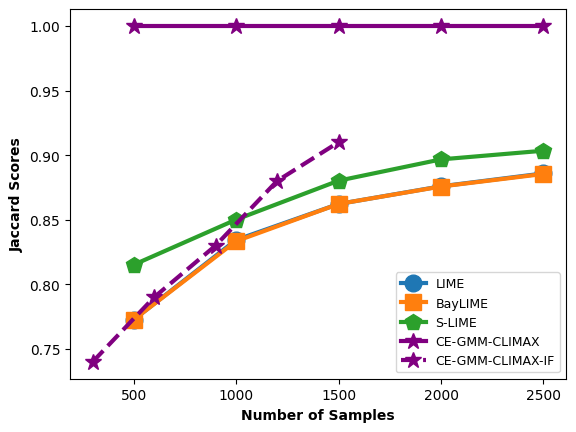}
                \caption{Ionosphere Dataset}
                \label{fig:Ionosphere Plot}
        \end{subfigure}%
        \begin{subfigure}[b]{0.25\textwidth}
                \centering
                \includegraphics[width=\linewidth]{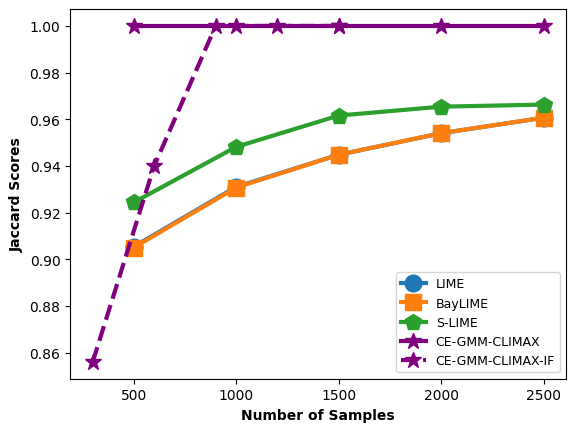}
                \caption{Diabetes Dataset}
                \label{fig:Diabetes Plot}
        \end{subfigure}

\caption{The average Jaccard scores for $10$ randomly sampled test instances across various numbers of surrogate samples of the best variant of CLIMAX (CE-GMM-CLIMAX) as compared to the three state-of-the-art methods.}
\label{fig:baseline_compare}
\end{figure*}

For evaluating the inconsistency in explanations over multiple runs, we execute CLIMAX, and the baselines using $500$, $1000$, $1500$, $2000$, and $2500$ surrogate samples and collected $20$ consecutive explanations for $10$ randomly selected index samples for each of the four datasets described in Table \ref{tab:dataset_description}.The Jaccard's distance(J) \cite{Unravel, DLIME} for measuring the consistency in  explanations across the $i$-th, and $j$-th run can be computed as follows:
    \begin{equation}
        J(X_i,X_j) = \frac{|X_i \cap X_j|}{|X_i \cup X_j|},
    \end{equation}
where $X_i$ and $X_j$ are sets consisting of top-5 features for iterations $i$ and $j$. Intuitively, it can be observed that $J(X_i, X_j)=1$ if $X_i$ and $X_j$ have the same features, and $J(X_i, X_j)=0$ if they have no common features. Thus, a consistent explainer module will have a relatively higher value of this metric than a relatively inconsistent explainer module. We averaged this metric over all possible combinations of iterations and the $10$ index samples. The results can be seen in Figure \ref{fig:CLIMAX_Tabular}. We average the values over $500$, $1000$, $1500$, $2000$, and $2500$. Across all datasets, incorporating the Cross-Entropy Loss along with sampling from a Gaussian Mixture Model (CE-GMM-CLIMAX) improved the model stability and fidelity to a large extent. Hence, we take only that method and it's influence subsampling counterpart forward and compare it with the other baseline methods in \ref{fig:baseline_compare}. It can be seen that for various sample sizes, CLIMAX outperforms both LIME, BayLIME, and S-LIME across all datasets. For S-LIME, we restrict the $n_{max}$ parameter to be $1.5$ times the size of the original number of samples. 
\subsubsection{CLIMAX Surrogate Data}
To evaluate the quality of the surrogate dataset generated by CLIMAX be it through GMM Sampling or Random Oversampling, we collected the surrogate data generated during the stability experiment for twenty samples from all our datasets and calculated the macro-precision and recall scores. CLIMAX improves these scores, through oversampling and then subsequently subsampling by influence. We depict this in Table 2, which explains how our explainer works with the surrogate data. 

It can be seen that the bootstrapping samples obtained using the procedure according to \cite{LIME,BayLIME} is highly imbalanced for all datasets (first row for each dataset). Further, we see that to a large extent the imbalance is removed using ROS and GMM under CLIMAX. Although ROS leads to an improvement in precision and recall scores, the information content in the data is the same as the bootstrapped samples. This necessitates a technique like GMM that also improves the quality of data. In some cases, the IF subsampled points lead to lower precision and recall scores. However, we believe that IF maintains the quality of data, and hence, lower precision-recall scores may not translate to poor explanation quality. 

\begin{figure}[h]
\centering
\includegraphics[width=0.4\textwidth]{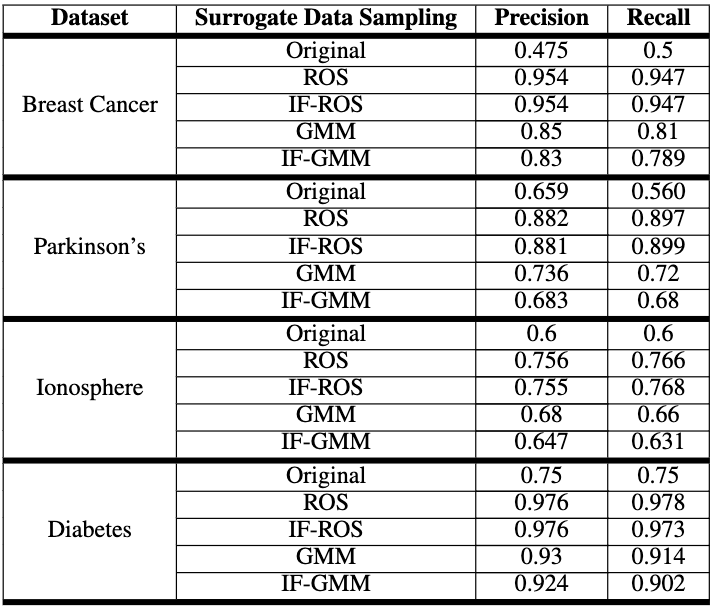}
\caption*{\textbf{Table 2}: Understanding the imbalance within the surrogate samples and randomly oversampling the data to get a fully balanced dataset.}
\label{fig:Table_orig_paper}
\end{figure}

\subsubsection{CLIMAX on Text Datasets}

To showcase CLIMAX's ability to provide robust textual explanations, we employed the information retrieval based tf–idf (term frequency–inverse document frequency) framework. We first extract features from the data using the tf-idf method. We train the black-box model and choose a test sample as the index sample. We compare our method with LIME in Figure \ref{fig:Text_Result_CLIMAX}.

We see that explanations of CLIMAX agree with LIME on many words (as in the highlighted text). However, the contrast in scores is large mainly because these explanations provide reasoning as to why one class is chosen instead of the other. The explanation of CLIMAX as compared to LIME on a large paragraph is provided using the 20 News Group dataset. Due to lack of space, we have moved this result to the supplementary.

\subsection{Climax on Image Dataset}

In the case of the image data, we first preprocess the data by using a popular segmentation algorithm called quickshift within the Scikit-image module \cite{LIME}. We depict the explanations provided by CLIMAX in Fig.~\ref{fig:better_image}. Due to space constraints, we provide a comparison between  LIME, CLIMAX and CEM in the supplementary. 

From Fig.~\ref{fig:better_image}, we see that an interesting benefit of contrastive explanations in CLIMAX is the possibility comparing explanations across classes. We show that regions in numerals provide explanations that are complementary to each other.  For example, similar to several works that investigate explanations for $3$ versus $5$ \cite{ignatiev2020relating}, we see that the explainer is sure about class $3$ due to the upper half, but neutral about the bottom half. Investigating digit $5$, we see that the explainer is neutral about the upper half, but neutral about the bottom half. This shows that CLIMAX is  not only contrastive within the same image, but consistent across images of different classes. We depict several such examples in the figure.

\begin{figure}
\centering
\begin{tabular} {cc}
\setlength\tabcolsep{0pt}

\includegraphics[width=0.22\textwidth]{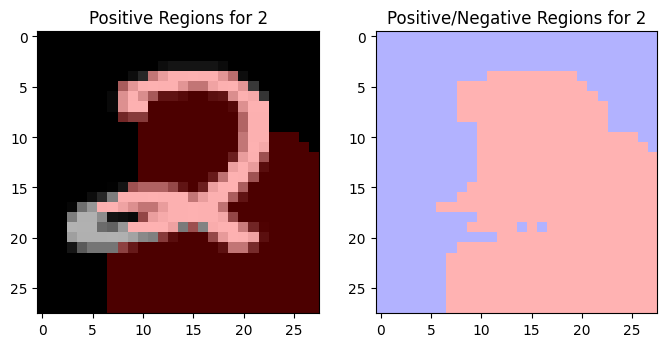} &
\includegraphics[width=0.22\textwidth]{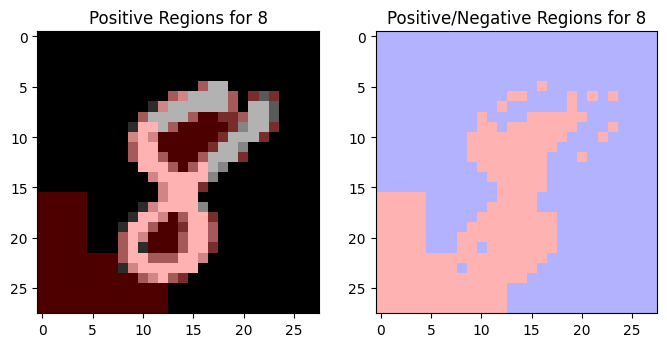} \\
\includegraphics[width=0.22\textwidth]{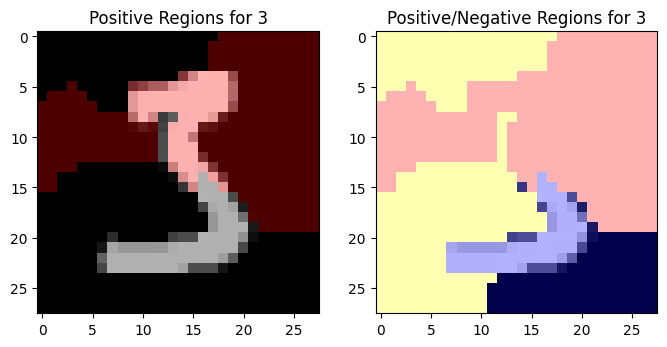} &
\includegraphics[width=0.22\textwidth]{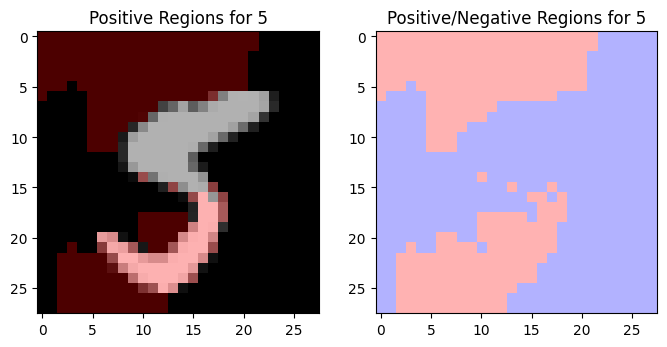} \\
\includegraphics[width=0.22\textwidth]{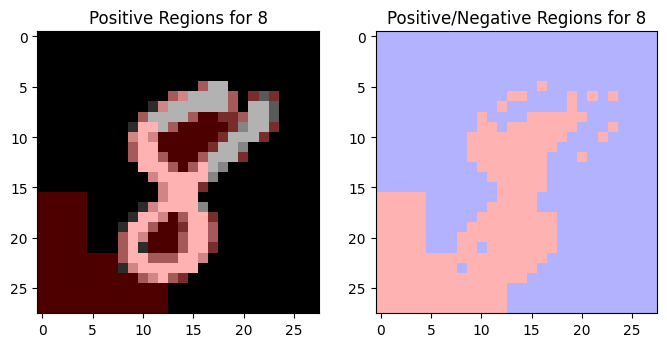} &
\includegraphics[width=0.22\textwidth]{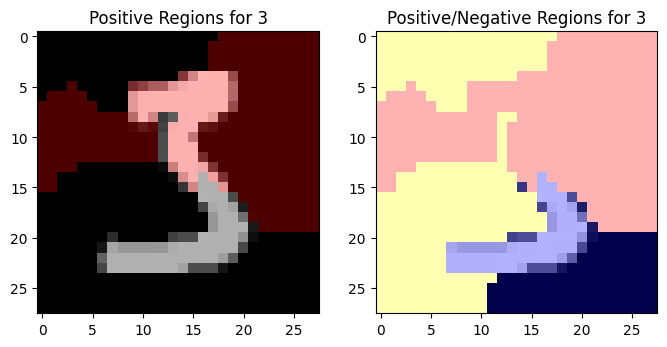}  \\
\includegraphics[width=0.22\textwidth]{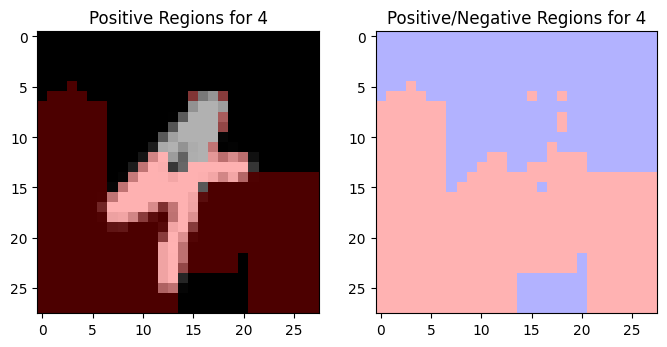} &
\includegraphics[width=0.22\textwidth]{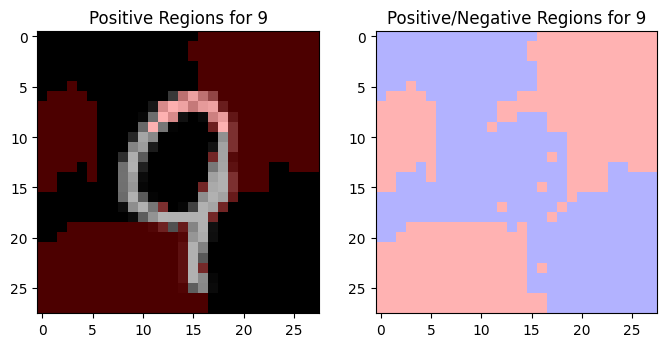}  \\
\end{tabular}
\caption{Understanding the contrastive nature of CLIMAX's image explanations for the MNIST Dataset. In the first row, the upper region of the digit 2 is marked positive and the same region for 8 is marked ambiguous by CLIMAX. Such a pattern is followed in all of these cases. What we mean by this is, the regions where the characteristics of a particular digit can be explicitly seen are given positive weightage. The same region is given an ambiguous state for other digits due to the same reason. Hence, the explanations help us in determining why a point lies in class $c_i$ and not in classes $c_{-i}$.}
\label{fig:better_image}
\end{figure}

\section{Conclusions and Future Work}
CLIMAX (\textbf{C}ontrastive \textbf{L}abel-aware \textbf{I}nfluence-based \textbf{M}odel-\textbf{A}gonostic \textbf{X}AI) is a perturbation based explainer, which exploits the classification boundary to provide contrastive results. CLIMAX perturbs the index sample to obtain surrogate samples by oversampling the instances of the minority class using random oversampling or GMMs, in order to obtain a balanced dataset. It also employs influence subsampling in order to reduce the sample complexity. Explanation scores are then provided using a logistic regression module, and we propose two variants in this direction. As compared to other perturbation-based techniques, CLIMAX explains why a point is in class $c_i$ and also provides information about why it is not in the remaining classes $c_{-i}$. CLIMAX gives access to the explainer to all class probabilities, helping in providing the contrastive scores. We observe that CLIMAX is able to produce more stable, faithful, and contrastive results as compared to LIME across different modalities of data. CLIMAX provides an important insight as the inherent task which is being explained is classification-based. Hence, CLIMAX is a well-rounded extension of LIME for black-box classifiers. In the future, we would like to provide uncertainty estimates for our explanations. This would help in checking the fidelity of the explanations better. 

\section{Additional Results and Discussions}
In this section, we demonstrate the efficacy of the proposed CLIMAX framework on publicly-available datasets. In particular, we are interested in establishing the contrastive capability and investigating the attributes such as stability (consistency) and sample efficiency in repeated explanations. We employ tabular, textual, and image datasets and consider different black-box models for an explanation.

\subsubsection{TSNE Plots: CLIMAX Surrogate Data}
To evaluate the quality of the surrogate dataset generated by CLIMAX using GMM Sampling, we plot to TSNE plots in the case of MNIST image dataset. In Fig.~\ref{fig:tsne2}, the index sample belonged to class 2 and we see that the GMM sampling occurs from 6 main clusters. This implies that the digit 2 is similar to six other digits, and that the sampling for the surrogate data is uniform. Similarly, in Fig.~\ref{fig:tsne3}, the index sample belonged to class 3 and we see that the sampling occurs from seven classes. If we look at both the remaining classes, which are dissimilar to the digits in question, are sampled uniformly from each of the similar clusters. Hence, we conclude that the surrogate sampling via GMM is not only effective, but also explainable. In comparison to methods that incorporate autoencoders and other black-box data generation mechanisms \cite{dhurandhar2018explanations,wang2022not}, we find our technique to be more transparent and trustworthy. 

\begin{figure}
\centering
\includegraphics[width=0.5\textwidth]{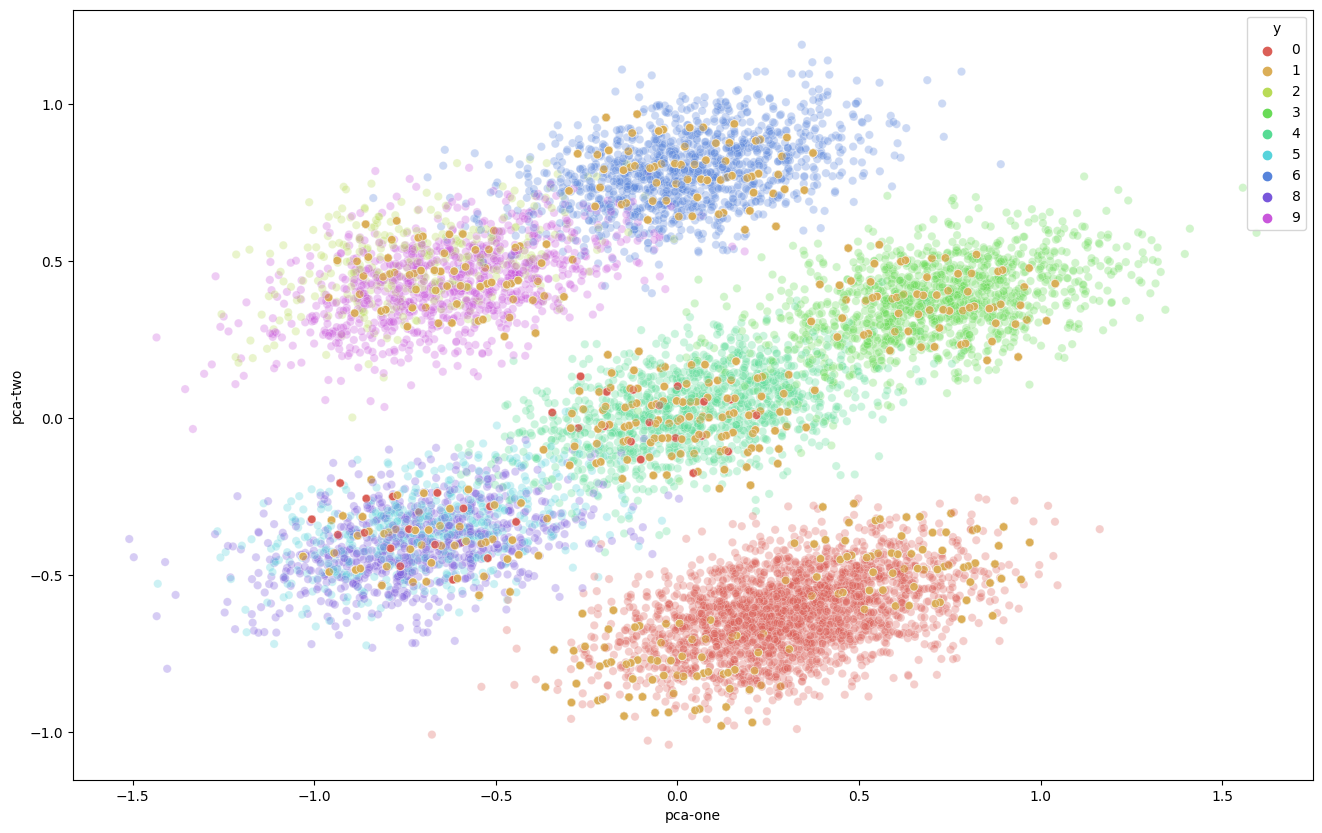}
\caption{This image shows the tSNE plots for the GMM sampling to cover for the oversampling of minority classes while sampling for the Digit 2. As we can see, the GMM samples well across the classes where the digits look similar to 2, and doesn't sample points from classes like 7, which are totally dissimilar. Hence, unlike Random Oversampling, where all classes get oversampled to the same sample size, GMM does this optimally. We reduce the imbalance, while maintaining the quality of the surrogate data, which is of utmost importance.}
\label{fig:tsne2}
\end{figure}

\begin{figure}
\centering
\includegraphics[width=0.5\textwidth]{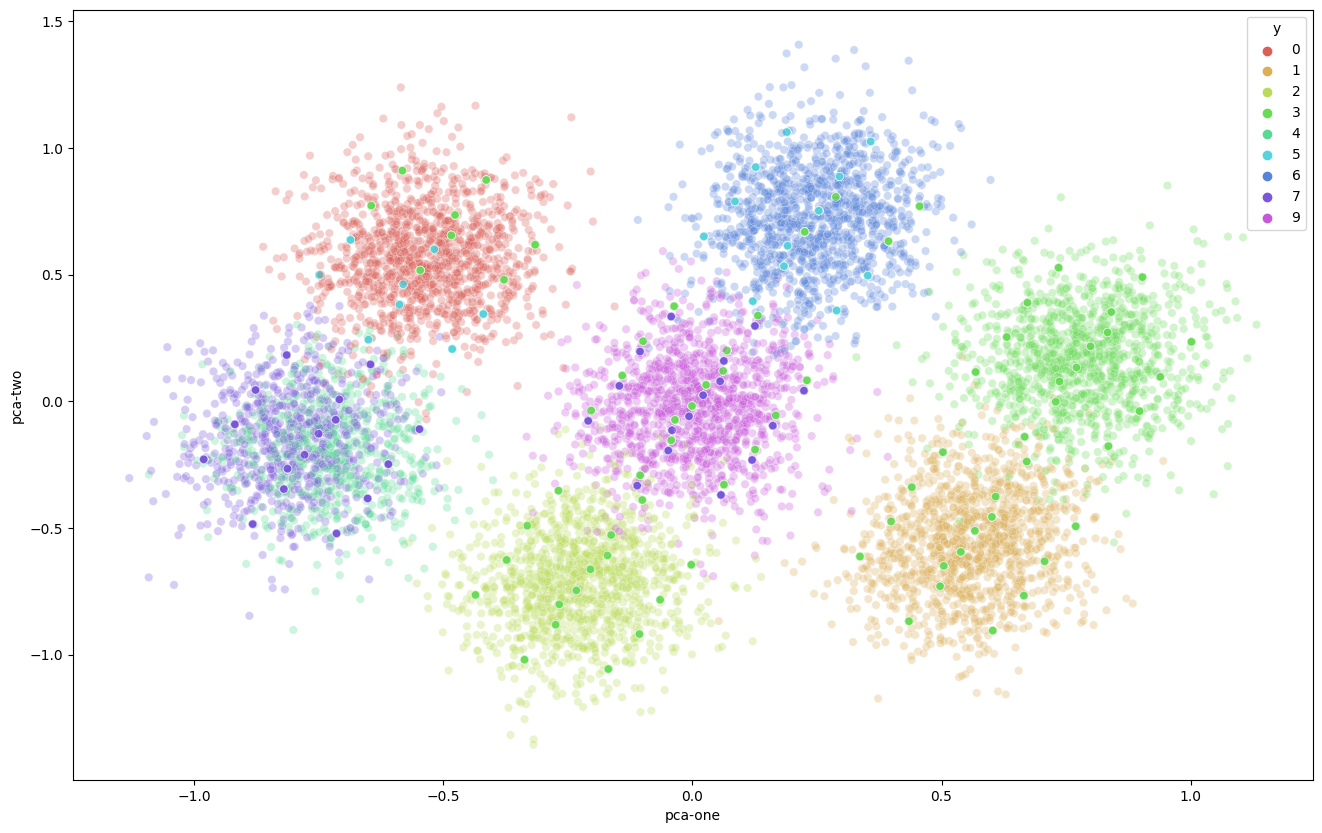}
\caption{This image shows the tSNE plots for the GMM sampling to cover for the oversampling of minority classes while sampling for the Digit 3. As shown in \ref{fig:tsne2}, GMM samples smartly from the given $y$ values. The only difference is that, for 2, the number of classes the GMM considered for oversampling were 6 and for 3 it creates 7 clusters. This is because 3, as a digit, is similar to 7 other digits from different angles.}
\label{fig:tsne3}
\end{figure}

\subsubsection{CLIMAX on Text Datasets}

To showcase CLIMAX's ability to provide robust textual explanations, we employed the  information retrieval based tf–idf (term frequency–inverse document frequency) framework.We first extract features from the data using the tf-idf method. We compare our method with LIME in Figure \ref{fig:CLIMAX_vs_LIME_20NewsGroup}.
We see that even on longer paragraphs of text, CLIMAX maintains its contrastive capability, as compared to LIME. 

\begin{figure*}
        \begin{subfigure}[b]{0.33\textwidth}
                \centering
                \includegraphics[width=.85\linewidth]{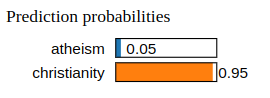}
                \caption{Class probabilities of the index sample.}
        \end{subfigure}%
        \begin{subfigure}[b]{0.33\textwidth}
                \centering
                \includegraphics[width=.85\linewidth]{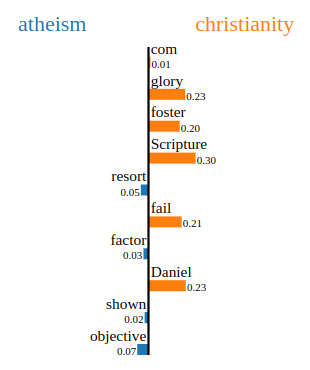}
                \caption{CLIMAX Explanation}
        \end{subfigure}%
        \begin{subfigure}[b]{0.33\textwidth}
                \centering
                \includegraphics[width=.85\linewidth]{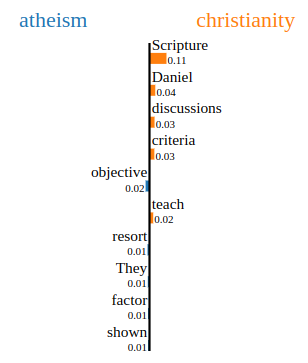}
                \caption{LIME Explanation}
        \end{subfigure}
        \begin{subfigure}[b]{\textwidth}
        \centering
        \includegraphics[width=.85\linewidth]{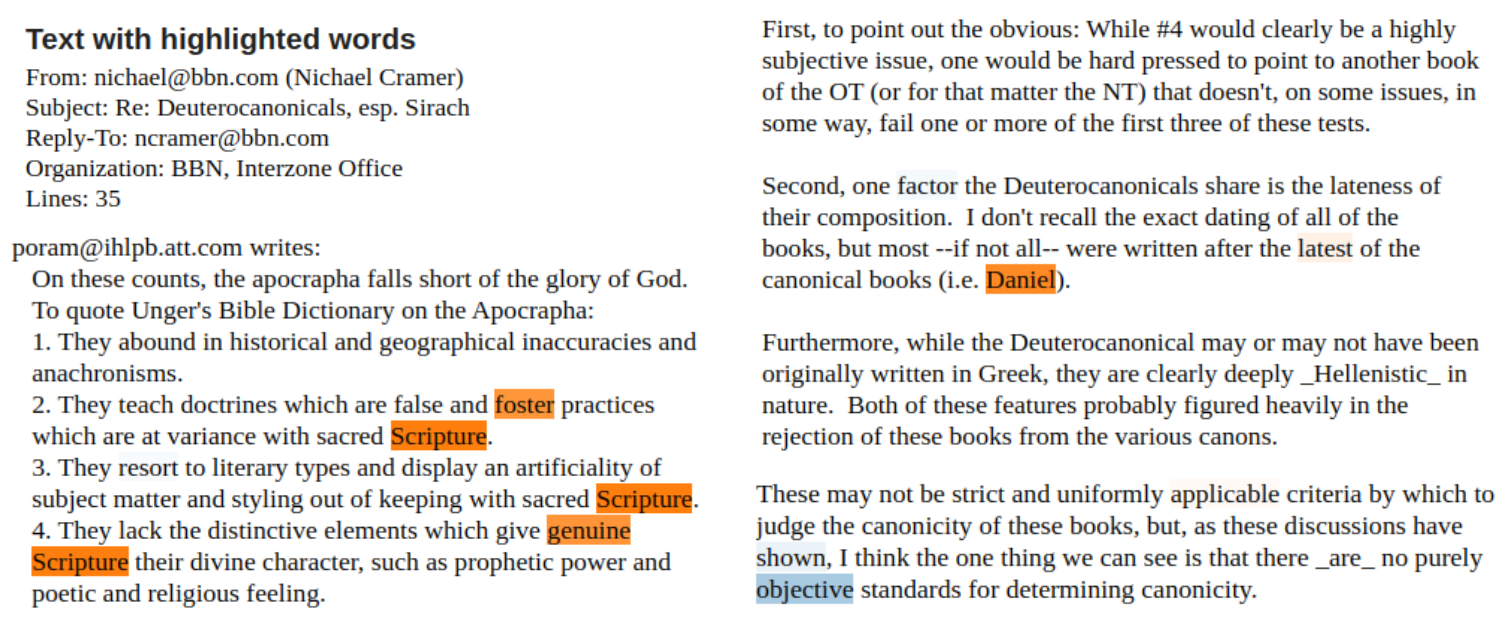}
        \caption{The instance for which the explanations have been provided.}
        \end{subfigure}
        \caption{Explanations for CLIMAX and LIME for the same instance of the 20 Newsgroups dataset.}
        \label{fig:CLIMAX_vs_LIME_20NewsGroup}
\end{figure*}

\begin{figure*}
\begin{tabular}{ccccccc}
    \toprule
    \bfseries LIME Explanation & 
    \bfseries LIME Region & 
    \bfseries CLIMAX Explanation &
    \bfseries CLIMAX Region &
    \bfseries CEM Image &
    \bfseries CEM PP Region &
    \bfseries CEM PN Region \\
    \hline
\adjustimage{height=2cm,valign=m}{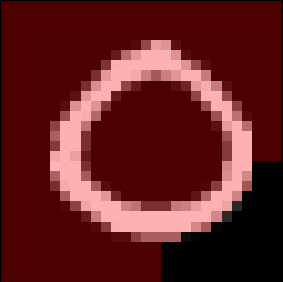} &
\adjustimage{height=2cm,valign=m}{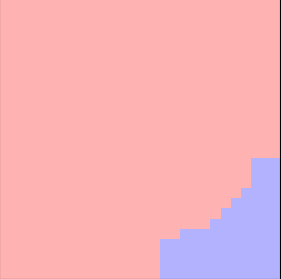} &
\adjustimage{height=2cm,valign=m}{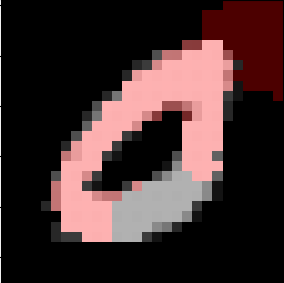} &
\adjustimage{height=2cm,valign=m}{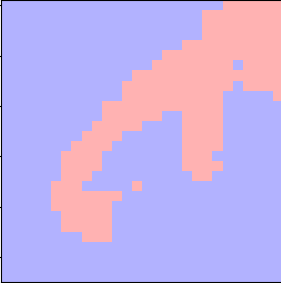} &
\adjustimage{height=2cm,valign=m}{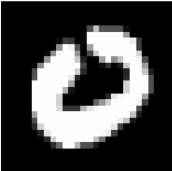} &
\adjustimage{height=2cm,valign=m}{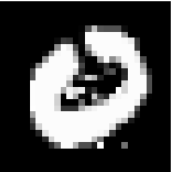} &
\adjustimage{height=2cm,valign=m}{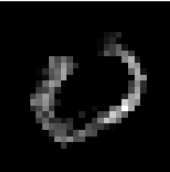} \\
    
\adjustimage{height=2cm,valign=m}{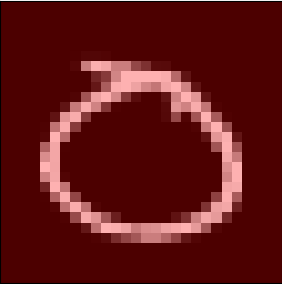} &
\adjustimage{height=2cm,valign=m}{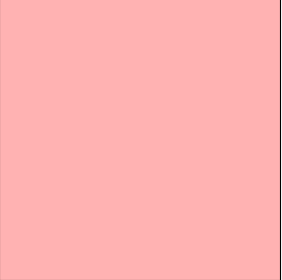} &
\adjustimage{height=2cm,valign=m}{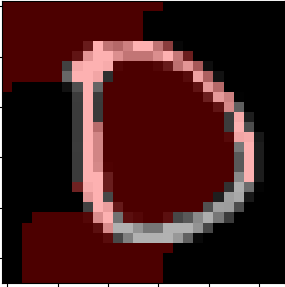} &
\adjustimage{height=2cm,valign=m}{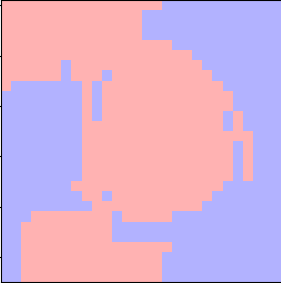} &
\adjustimage{height=2cm,valign=m}{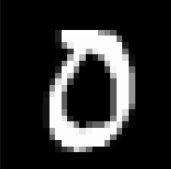} &
\adjustimage{height=2cm,valign=m}{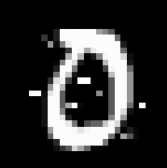} &
\adjustimage{height=2cm,valign=m}{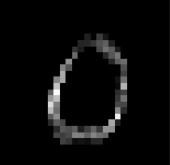} \\

\adjustimage{height=2cm,valign=m}{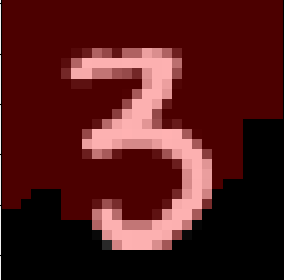} &
\adjustimage{height=2cm,valign=m}{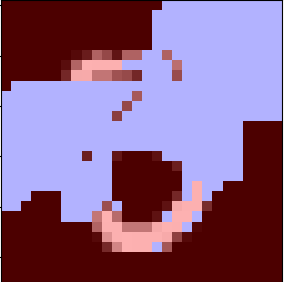} &
\adjustimage{height=2cm,valign=m}{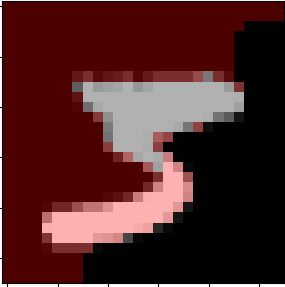} &
\adjustimage{height=2cm,valign=m}{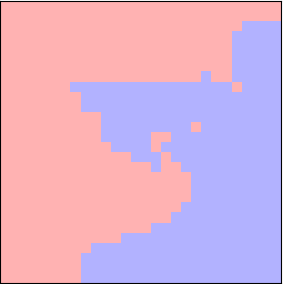} &
\adjustimage{height=2cm,valign=m}{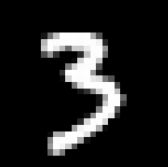} &
\adjustimage{height=2cm,valign=m}{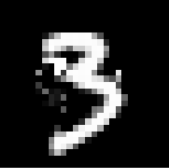} &
\adjustimage{height=2cm,valign=m}{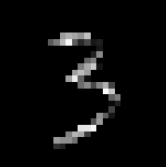} \\

\adjustimage{height=2cm,valign=m}{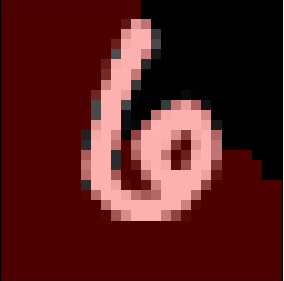} &
\adjustimage{height=2cm,valign=m}{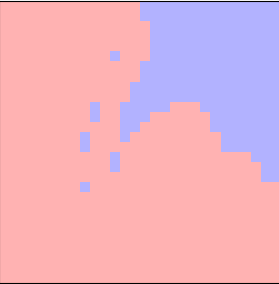} &
\adjustimage{height=2cm,valign=m}{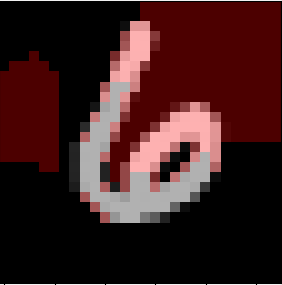} &
\adjustimage{height=2cm,valign=m}{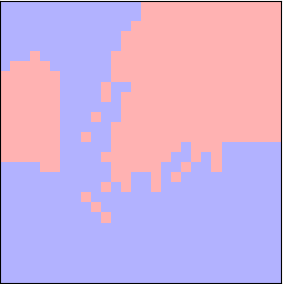} &
\adjustimage{height=2cm,valign=m}{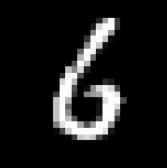} &
\adjustimage{height=2cm,valign=m}{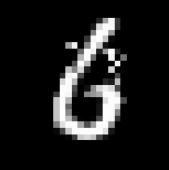} &
\adjustimage{height=2cm,valign=m}{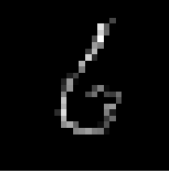} \\

\adjustimage{height=2cm,valign=m}{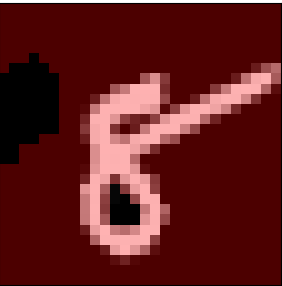} &
\adjustimage{height=2cm,valign=m}{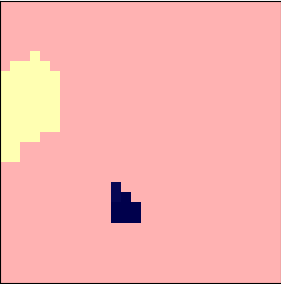} &
\adjustimage{height=2cm,valign=m}{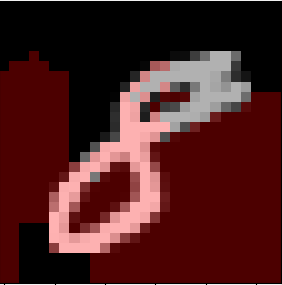} &
\adjustimage{height=2cm,valign=m}{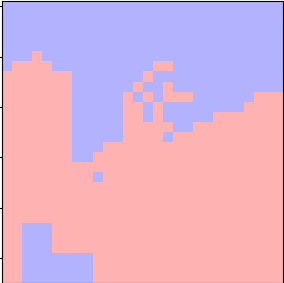} &
\adjustimage{height=2cm,valign=m}{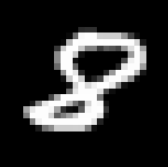} &
\adjustimage{height=2cm,valign=m}{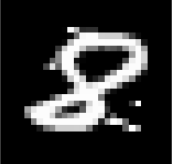} &
\adjustimage{height=2cm,valign=m}{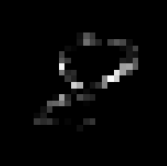} \\

\adjustimage{height=2cm,valign=m}{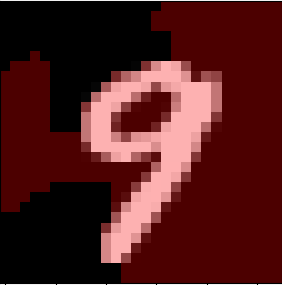} &
\adjustimage{height=2cm,valign=m}{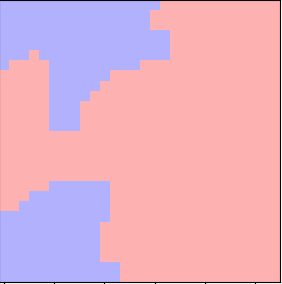} &
\adjustimage{height=2cm,valign=m}{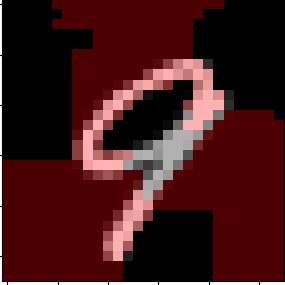} &
\adjustimage{height=2cm,valign=m}{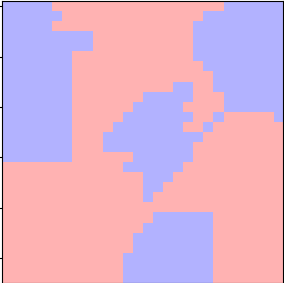} &
\adjustimage{height=2cm,valign=m}{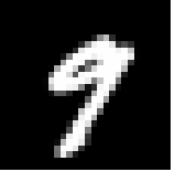} &
\adjustimage{height=2cm,valign=m}{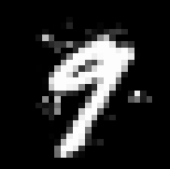} &
\adjustimage{height=2cm,valign=m}{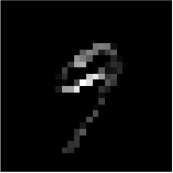} \\
\label{tab:key}
\end{tabular}
\caption{We compare CLIMAX with  LIME and the Contrastive Explanations Method (CEM). We do so as CLIMAX and CEM both aim to provide more contrastive results. If we look at the explanation masks of LIME, we can see that it attributes an unnecessarily large region for an explanation, even when it is provided with a larger surrogate sample size. In the case of CEM, the Pertinent Positive (PP) regions and the Pertinent Negative (PN) regions do bring out a contrastive flavour, but CLIMAX shows ambiguity in regions where multiple digits look similar, making it more visually trustworthy.}
\label{fig:fig_key}
\end{figure*}

\subsection{CLIMAX for images Vs CEM \cite{dhurandhar2018explanations}}
In the case of CEM, the classification boundary is exploited well in terms of the regions that are Pertinently Positive and Pertinently Negative. However, the ambiguity in the sub-parts of an image due to overlap in the PP and the PN regions makes the classification of a digit uncertain. For instance, common regions in the digit $0$ have been marked as pertinent positive and pertinent negative. It is not clear how the en-user is supposed to interpret these areas. In particular, joint analysis of pertinent positive and pertinent negative regions are not possible. Moreover, comparison across different digits is also not possible. CLIMAX does not face such challenges. On one hand, within the same digit, it clearly marks the regions that it is certain (in pink for all digits) and uncertain (in grey for all digits). Furthermore, we can also analyse across digits, where if one area is marked positive for a certain digit, then that same area would be marked neutral for many other digits. This nature is visible across all CLIMAX explanations. 

\subsection{CLIMAX for images vs LIME\cite{LIME}}
We compare CLIMAX with LIME, the most popular post-hoc explainable AI method, which set the foundation for such methods in Fig.~\ref{fig:fig_key}. We see that often LIME is not able to distinguish between regions encapsulated within a digit and the digit boundary itself. This is mainly because LIME does not take as input all class probabilities, and employ any decision boundary aware mechanism. The contrastive nature of the explanations is evident here as well, where CLIMAX tends to indicate neutral regions which it is not sure about. However, LIME does not capture such contrast. 

\bibliography{ecai}
\end{document}